\documentclass[a4paper,fleqn]{cas-dc}

\usepackage[numbers]{natbib}
\usepackage{amsmath,amssymb,amsthm,mathtools}
\usepackage{booktabs}
\usepackage{multirow}
\usepackage{array}
\usepackage{algorithm}
\usepackage{algpseudocode}
\usepackage{bbm}
\usepackage{enumitem}

\DeclareMathOperator{\rank}{rank}

\newcommand{\R}{\mathbb{R}}
\newcommand{\bbone}{\mathbbm{1}}
\newcommand{\mcal}{\mathcal{M}}
\newcommand{\xt}{\boldsymbol{x}}
\newcommand{\htok}{\boldsymbol{h}}

\newcommand{\Dbar}{\bar D}
\newcommand{\norm}[1]{\left\lVert #1 \right\rVert}

\newcommand{\Dscore}{\mathrm{D}}
\newcommand{\modelname}[1]{\texttt{\hyphenchar\font=`\- #1}}

\newtheorem{definition}{Definition}

\newtheorem{hypothesis}{Hypothesis}
\newtheorem{proposition}{Proposition}

\begin{document}
\let\WriteBookmarks\relax
\def\floatpagepagefraction{1}
\def\textpagefraction{.001}

\shorttitle{D-Score: A Spectral Hidden-State Signal for Hallucination Detection in Large Language Models}
\shortauthors{B. Raimondi et~al.}

\title[mode=title]{D-Score: A Spectral Hidden-State Signal for Hallucination Detection in Large Language Models}

\author[1]{Bianca Raimondi}[orcid=0009-0002-1562-7722]
\cormark[1]
\ead{bianca.raimondi3@unibo.it}
\credit{Conceptualization, Methodology, Software, Writing - original draft}

\author[1]{Davide Evangelista}[orcid=0000-0001-6261-7717]
\ead{davide.evangelista5@unibo.it}
\credit{Conceptualization, Methodology, Writing - original draft}

\author[1]{Maurizio Gabbrielli}[orcid=0000-0003-0609-8662]
\ead{maurizio.gabbrielli@unibo.it}
\credit{Supervision}

\author[1]{Elena {Loli Piccolomini}}[orcid=0000-0002-9951-3564]
\ead{elena.loli@unibo.it}
\credit{Conceptualization, Supervision}

\affiliation[1]{organization={Department of Computer Science and Engineering, University of Bologna},
                city={Bologna},
                country={Italy}}

\cortext[cor1]{Corresponding author}

\begin{abstract}
Large Language Models can produce fluent text that is false, unsupported by the available evidence, or inconsistent with information that appears to be internally represented by the model. We study hallucination detection from the geometry of hidden activations and introduce the \emph{D-Score}, a simple spectral statistic computed from a single forward pass. For a fixed model, layer, and tolerance parameter, the D-Score counts how many singular directions of the hidden activation matrix have singular values that remain close to the leading one. We use this quantity as a hallucination score, classifying an input text as hallucinated when its D-Score is larger than a pre-defined quantity. The motivation is that, when a model processes a text that conflicts with information available in its own internal state, the hidden representation may encode both the asserted content and some form of counter-evidence, uncertainty, correction, or lack of support; this can make the hidden trajectory spread across additional singular directions. We formalize this intuition through a lightweight spectral argument and evaluate the resulting detector on FAVA-Annotation and RAGTruth. The experiments indicate that the D-Score is a strong hidden-state signal for hallucination detection, while requiring no external verifier, no retrieval step, and no multiple generations.
\end{abstract}

\begin{keywords}
Large Language Models \sep Hallucination Detection \sep Hidden Activations \sep D-Score \sep Singular Values \sep Spectral Methods \sep Mechanistic Interpretability
\end{keywords}

\maketitle

\section{Introduction}

Large Language Models (LLMs) are now routinely used for question answering, summarization, dialogue, and open-ended generation, but they can still produce statements that are fluent and plausible while being false, contradicted, or not justified by the relevant evidence. This phenomenon is commonly referred to as \emph{hallucination}, and it remains one of the central obstacles to deploying LLMs in settings where reliability matters \cite{ji2023survey,farquhar2024semantic}. In this work, we use the term \emph{claim} to refer to a semantically checkable assertion expressed by a whole sentence or a portion of it. We define a claim to be \emph{supported} by an LLM when it can be proved correct by context or by prior information possessed by the model. Conversely, a claim is \emph{unsupported} when it is either recognized as false or uncertain by the model's internal knowledge. Under this terminology, hallucination detection is treated as the problem of identifying texts that contain at least one false or unsupported claim.

Existing approaches to hallucination detection often rely on information outside the original generation. Retrieval-based systems compare the generated text against external documents, while consistency-based methods sample several alternative answers from the same model and measure whether they agree with one another \cite{manakul2023selfcheckgpt,farquhar2024semantic}. These methods are useful, but they usually require additional inference passes, external resources, or a second model. This motivates a complementary question: \textit{can hallucination be detected directly from the internal states produced during a single forward pass?}

Several recent works suggest that the answer is positive. Internal-state methods such as INSIDE \cite{chen2024inside} and LLM-Check \cite{sriramanan2024llmcheck} show that hidden representations contain useful signals for detecting the truthfulness of generated statements. These results suggest that a model's internal geometry can contain information that is not fully visible from the generated text alone. In this paper, we follow this direction from a deliberately simple point of view. Rather than training an additional classifier on hidden states or comparing several sampled generations, we ask whether hallucination is reflected in the topology of the hidden-state trajectory produced by a single pass through the model.

Our starting intuition is that a well-supported text should induce a relatively coherent hidden trajectory, whose variation is concentrated along a small number of dominant directions. By contrast, when the text contains an unsupported claim that conflicts with what the model internally represents, the hidden states may have to encode the asserted content together with weaker signals of uncertainty. This can make the trajectory less concentrated and spread its variation across more directions, which can be detected by simple linear algebra tools. We therefore look for a simple way to measure how many directions carry a non-negligible part of the hidden-state energy.

This leads to the \emph{D-Score}. Given a fixed model, a fixed layer, and an input text, we consider the sequence of hidden states produced while processing the tokens of the text. The D-Score counts how many singular directions of this hidden-state trajectory remain close to the leading one, namely how many singular values $\sigma_i$ satisfy $\frac{\sigma_1}{\sigma_i} > \tau$ for a fixed $\tau > 0$. Since the ratio $\frac{\sigma_1}{\sigma_i}$ is related to the numerical dimensionality of the hidden states, a large D-Score signifies that the hidden trajectory spreads across many directions of comparable magnitude, while a small D-Score means that most of its variation is concentrated in a few dominant directions. Notably, computing the score does not require the full singular spectrum, which would be computationally heavy for recent, high-dimensional models: once the tolerance parameter $\tau$ and the decision level are fixed, it is sufficient to recover the leading singular values only up to the point where they fall below the relative floor using efficient algorithms such as the power method.

The detector induced by this idea is deliberately simple. After choosing the tolerance parameter $\tau$ and a calibrated decision level $\Dbar \in \mathbb{N}$, we predict hallucination whenever $\Dscore_\tau(\xt)>\Dbar$, where $\Dscore_\tau(\xt)$ indicates the D-score obtained by the considered model when fed with input text $\xt$. The score uses one forward pass, one hidden layer, and no external reference, and can be computed exactly from the singular values of the hidden activation matrix.

The contributions of this paper are as follows:
\begin{enumerate}
\item we introduce the D-Score $\Dscore_\tau(\xt)$ as a relative numerical rank of the hidden activation matrix of a fixed model and layer;
\item we give an operational definition of model-internal hallucination, based on unsupported claims to which the model assigns low internal truth belief;
\item we provide a geometric argument showing why hallucinated texts can increase the D-Score by spreading the hidden trajectory across additional conflict directions;
\item we evaluate D-Score against state-of-the-art methods, obtaining relatively strong results across \modelname{Llama-2-7B}, \modelname{Llama-3-8B}, and \modelname{Vicuna-7B} on multiple benchmarking datasets.
\end{enumerate}

The paper is organized as follows. In Section \ref{sec:related_works} we indicate the state-of-the-art related to our topic, highlighting the main difference from our approach. In Section \ref{sec:methodology} we introduce our D-score method, specifying the main mathematical properties and some details on the implementation. In Section \ref{sec:experiments} we report the experiments comparing the D-score against other common techniques for hallucination detection, showing that our method outperforms all of them in four different metrics, two datasets, and three open-weight models. Finally, in Section \ref{sec:discussion} we discuss our results and the limitations of our approach, and in Section \ref{sec:conclusion} we draw the conclusions. 

\section{Related Work}\label{sec:related_works}

Hallucination detection is a broad research area encompassing a wide range of approaches. Our work is most closely related to methods that leverage model uncertainty and internal representations to detect hallucinations, rather than relying solely on external retrieval. Hallucination has been studied across summarization, dialogue, data-to-text generation, question answering, and more recent LLM settings \cite{ji2023survey}. In this work, we focus on factual hallucinations, namely claims that are false, contradicted, invented, or not supported by the relevant evidence. This is the type of annotation provided by benchmarks such as FAVA-Annotation \cite{mishra2024fava}, which introduces a fine-grained taxonomy of hallucination types, and RAGTruth \cite{niu2024ragtruth}, which studies hallucinations in retrieval-augmented generation with manual span-level annotations.

A common way to detect hallucinations is to ask the model several times and measure whether the sampled answers are mutually consistent. SelfCheckGPT \cite{manakul2023selfcheckgpt} follows this idea in a black-box setting, using the intuition that facts known by the model should be repeated consistently across samples, while hallucinated facts should vary more. Semantic entropy methods refine this idea by grouping sampled outputs according to meaning, rather than by surface form, and then measuring uncertainty over semantic alternatives \cite{farquhar2024semantic}. These methods are powerful, but they require multiple generations and therefore increase inference cost.

A second line of work uses hidden activations directly. Azaria and Mitchell \cite{azaria2023internal} show that hidden states can be used to classify statements as true or false. Agrawal et al. \cite{agrawal2024references} study hallucinated references and show that language models often behave inconsistently around fabricated citations, suggesting that some information about the error is available internally. INSIDE \cite{chen2024inside} uses eigenvalue-based summaries of internal states to detect hallucination through semantic consistency in embedding space, while LLM-Check \cite{sriramanan2024llmcheck} studies hallucination detection from hidden states, attention maps, and output probabilities in a single-response setting. More recent work \cite{orgad2025know} argues that truthful information is present in internal representations, but also that it can be dataset-dependent and not always aligned with the final output. Our method belongs to this internal-state family, but it focuses on a simpler spectral statistic: the number of singular directions that remain large relative to the top one.

The D-Score is also related to effective rank and numerical rank ideas in linear algebra and signal processing \cite{roy2007effective}. The purpose, however, is different from estimating the intrinsic dimension of a dataset in the usual statistical sense. Here, the relative rank of a single hidden activation matrix is used as a diagnostic signal for the model's processing of a single text.

\section{Methodology}
\label{sec:methodology}

The proposed method is based on the singular spectrum of the hidden states produced by a fixed model and layer. We first define the activation matrix and the corresponding D-Score, then introduce the decision rule used by the detector, and finally give the model-internal interpretation that motivates the score. Throughout the section, $\bbone\{\mathcal A\}$ denotes the indicator of the event $\mathcal A$, namely $\bbone\{\mathcal A\}=1$ if $\mathcal A$ is true and $\bbone\{\mathcal A\}=0$ otherwise.

\subsection{Activation spectrum and D-Score}

Let $m\in\mcal$ be a fixed LLM with hidden dimension $d$, and let $j$ be a fixed layer. An input text is a token sequence $\xt=(x_1,\ldots,x_T)$, where $T$ denotes the length of the text in tokens.
Let $\htok^{(j)}_t(\xt;m)\in\R^d$ be the hidden state at layer $j$ associated with token $x_t$. We define the hidden activation matrix as
\begin{align*}
H^{(j)}(\xt;m)=\begin{bmatrix}\htok^{(j)}_1(\xt;m) & \cdots & \htok^{(j)}_T(\xt;m)\end{bmatrix}\in\R^{d\times T}.
\end{align*}
When $m$ and $j$ are fixed, we write $H(\xt)$ for simplicity. Let $H(\xt)=U\Sigma V^\top$ be the Singular Value Decomposition (SVD) of $H(\xt)$, and let
\begin{align*}
\sigma_1(H)\geq\sigma_2(H)\geq\cdots\geq\sigma_r(H)>0
\end{align*}
be its non-zero singular values, where $r:=\rank(H)$. We define the D-Score as a relative numerical rank of $H(\xt)$, namely the number of singular directions that remain large compared with the leading direction.

\begin{definition}[D-Score]
Let $H(\xt)=H^{(j)}(\xt;m)\neq 0$ and let $\tau>1$. We define the D-Score of $\xt$ for model $m$, layer $j$, and threshold $\tau$ as
\begin{align}
\begin{split}
\Dscore_\tau^{(j)}(\xt;m)
&=\sum_{i=1}^{r}\bbone\left\{\frac{\sigma_1(H(\xt))}{\sigma_i(H(\xt))}\leq\tau\right\} \\
&=\max\left\{n\leq r:\frac{\sigma_1(H(\xt))}{\sigma_n(H(\xt))}\leq\tau\right\}.
\label{eq:d_score}
\end{split}
\end{align}
When $m$ and $j$ are fixed, we simply write $\Dscore_\tau(\xt):=\Dscore_\tau^{(j)}(\xt;m)$.
\end{definition}

A small value of $\Dscore_\tau(\xt)$ indicates that most of the variation in the hidden trajectory is concentrated in a few dominant directions, while a larger value indicates that several directions remain active at comparable scale. The score is relative: it depends on ratios between singular values, and therefore it is not affected by a global rescaling of the activation matrix. This is important because the quantity is meant to capture the topology of the hidden-state trajectory, rather than the overall magnitude of the activations.

Given a tolerance parameter $\tau$ and a calibrated decision level $\Dbar\in\mathbb{N}$, the D-Score induces the binary detector
\begin{align}
\widehat{Y}_{\tau,\Dbar}^{(j)}(\xt;m)=\begin{cases}
    1 &\mbox{ if } \Dscore_\tau^{(j)}(\xt;m)>\Dbar, \\
    0 &\mbox{ otherwise.}
\end{cases}
\label{eq:classifier}
\end{align}
Thus, the text is classified as hallucinated when its hidden activation matrix has more than $\Dbar$ singular directions above the relative threshold determined by $\tau$. The parameters $\tau$, $j$, and $\Dbar$ specify the operating point of the detector and are selected on validation data. Once these quantities are fixed, the method requires one forward pass through the model and one spectral computation on the resulting hidden activation matrix.

In this paper, we interpret a high D-Score as evidence of increased hallucination risk. The central hypothesis is that, when a generated text contains an unsupported claim that is visible as such in the model's own internal state, the hidden representation may activate additional directions associated with conflict, uncertainty, correction, or lack of support. These directions can make the singular spectrum less concentrated, and the D-Score is designed to measure this form of spectral spreading.

\subsection{Model-internal hallucination and geometric rationale}

We now introduce a model-internal notion of hallucination to clarify the type of error for which the score is expected to be informative.

Let $\mathsf{Claims}(\xt)$ be the finite set of atomic factual claims asserted by $\xt$ in the relevant evaluation context, where a \emph{claim} is a checkable assertion expressed by the text, while \emph{atomic} means that the assertion is treated as a single unit for evaluation. A claim does not necessarily coincide with a full sentence. For example, the sentence
\begin{quote}
\textit{The Transformer architecture was introduced in 2017 and ChatGPT was released in 2022.}
\end{quote}
contains two factual claims, one about the introduction of the Transformer architecture and one about the release of ChatGPT. Conversely, a sentence such as
\begin{quote}
\textit{Transformers are more elegant than recurrent networks.}
\end{quote}
expresses a preference or qualitative judgment, and therefore is not a factual claim in the sense considered here.

The available evidence is the reference information with respect to which a claim is judged. In an open-domain benchmark, this may correspond to world knowledge or to the information used by human annotators to assess factuality. In a retrieval-augmented benchmark, it is usually the source document or the retrieved passages provided to the model. Thus, the same claim may be judged differently depending on the evaluation protocol: a claim can be true as a matter of world knowledge, but still unsupported in a retrieval-augmented setting if it does not follow from the provided passages.

Let $s(a)\in\{\texttt{true},\texttt{false},\bot\}$ be the support label of a claim $a\in\mathsf{Claims}(\xt)$. The value $s(a)=\texttt{true}$ means that $a$ is supported by the available evidence, $s(a)=\texttt{false}$ means that $a$ is contradicted by the available evidence, and $s(a)=\bot$ means that $a$ is not verifiable from the available evidence. Since the experiments evaluate hallucination as a binary condition, we introduce the collapsed label $s^+(a)=\bbone\{s(a)=\texttt{true}\}$. Therefore, $s^+(a)=0$ covers both contradictory and non-verifiable claims. We return to this binary reduction in the experimental section, where the dataset annotations are mapped to hallucinated and non-hallucinated examples.

Let $Z_m(\xt)$ denote the set of all internal information produced by model $m$ on $\xt$, for example hidden states, attention states, and logits. We define the model-internal truth belief of claim $a$ as the probability assigned by an ideal calibrated probe that has access to $Z_m(\xt)$:
\begin{align*}
b_m(a\mid\xt)=\mathbb{P}\bigl(s^+(a)=1\mid Z_m(\xt)\bigr).
\end{align*}
While this probe cannot be explicitly constructed, it is introduced here as a conceptual device: it separates unsupported claims that are, in principle, detectable from the model's internal state from unsupported claims for which the model may not contain the information needed to recognize the error.

\begin{definition}[Model-internal hallucination]
\label{def:model_internal_hallucination}
Fix a tolerance level $\beta\in(0,1/2)$. A text $\xt$ is said to be an internal hallucination for model $m$ at tolerance $\beta$ if $\mathsf{Claims}(\xt)$ contains an unsupported assertion to which the model assigns low internal truth belief:
\begin{align*}
Y_m^\beta(\xt)=\bbone\{\exists a\in\mathsf{Claims}(\xt): s^+(a)=0 \\\text{ and } b_m(a\mid\xt)\leq\beta\}.
\end{align*}
We write $Y_m^\beta(\xt)=1$ for hallucinated and $Y_m^\beta(\xt)=0$ for non-hallucinated.
\end{definition}

This definition separates two cases that are often conflated in hallucination detection. A claim can be unsupported according to the evaluation evidence, while the model may not internally represent enough information to recognize the problem. Conversely, a claim can be unsupported and also leave an internal trace, because the model represents information that makes the claim unlikely or inconsistent. The proposed method is intended for the second case. This is a limitation of the approach, but also the reason why a hidden-state detector can be meaningful without consulting external evidence at inference time.

The D-Score detector can now be interpreted as a simple proxy for this idealized target. It does not estimate $b_m(a\mid\xt)$ and does not identify the offending claim. Instead, it tests whether the hidden trajectory has an unusually large number of significant singular directions. The direction of the decision rule in Equation~\eqref{eq:classifier} reflects the assumption that internally visible unsupported content tends to make the hidden trajectory less concentrated. Grounded texts usually organize entity information, relations, syntactic structure, and continuation constraints around a coherent interpretation, so the corresponding activations can remain concentrated in a limited set of dominant directions. When the text asserts a claim that conflicts with information represented by the model, the hidden state may also carry traces of counter-evidence, uncertainty, correction, or missing support. These traces need not dominate the representation, but if they are sufficiently strong and not aligned with the directions used to encode the asserted content, they increase the number of singular values that remain large relative to $\sigma_1$.

This leads to the empirical hypothesis that we test:
\begin{hypothesis}[D-Score hallucination hypothesis]\label{hyp:dscore_large_values}
For a fixed model $m$, layer $j$, and spectral threshold $\tau$, large values of $\Dscore_\tau^{(j)}(\xt;m)$ are associated with a higher probability of model-internal hallucination. Equivalently, the mapping
\[
k\mapsto\mathbb{P}\left(Y_m^\beta(\xt)=1\mid\Dscore_\tau^{(j)}(\xt;m)=k\right)
\]
is monotonically non-decreasing for $k=1,\ldots,r$.
\end{hypothesis}

The hypothesis is intentionally empirical. It can fail when the model does not internally represent the error, when the error signal is too weak, or when the hidden-state change is mostly a change in magnitude rather than a change in direction. It can also fail when non-hallucinated texts are complex enough to activate several unrelated semantic directions. For this reason, the D-Score should be understood as a calibrated risk score, rather than as a universal characterization of hallucination.

The proposition below states a sufficient geometric mechanism behind the method. If unsupported content introduces $q$ sufficiently strong directions that are not already present in the coherent representation of the text, then the D-Score increases by at least $q$. Its proof is elementary and is given in Appendix~\ref{app:proofs}.

\begin{proposition}[Conflict directions increase D-Score]
\label{prop:conflict}
Let $S,C\in\R^{d\times T}$ be two hidden-state components. We interpret $S$ as the coherent component of the hidden trajectory, namely the part of the representation associated with processing the asserted content, the entities, the relations, and the local continuation constraints of the text. We interpret $C$ as an additional conflict component, namely the part of the representation associated with internally visible counter-evidence, uncertainty, correction, or missing support. Assume that the singular directions of $C$ are orthogonal to those of $S$, and that $\sigma_1(C)\leq\sigma_1(S)$. If there exists $q\in\mathbb{N}$ such that
\[
\sigma_1(C)\geq\cdots\geq\sigma_q(C)\geq\sigma_1(S)/\tau,
\]
then
\[
\Dscore_\tau(S+C)\geq\Dscore_\tau(S)+q.
\]
\end{proposition}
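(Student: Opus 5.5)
The plan is to read the orthogonality hypothesis as saying that the column and row singular subspaces of $C$ are orthogonal to those of $S$: $U_S^\top U_C=0$ and $V_S^\top V_C=0$. Under this reading, $S+C$ has an SVD obtained by concatenating the two decompositions. Concretely, write $S=U_S\Sigma_S V_S^\top$ and $C=U_C\Sigma_C V_C^\top$ as thin SVDs. Then $[U_S\ U_C]$ and $[V_S\ V_C]$ have orthonormal columns, and
\[
S+C=\begin{bmatrix}U_S & U_C\end{bmatrix}\begin{bmatrix}\Sigma_S & 0\\ 0 & \Sigma_C\end{bmatrix}\begin{bmatrix}V_S & V_C\end{bmatrix}^\top .
\]
Hence the multiset of nonzero singular values of $S+C$ is exactly the union of those of $S$ and of $C$. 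If only the left (or only the right) singular subspaces were orthogonal, I would instead argue via $(S+C)^\top(S+C)=S^\top S+C^\top C$ (respectively $(S+C)(S+C)^\top$), whose eigen-decomposition again splits into orthogonal blocks. This is the main point to make precise.

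Next, I identify the leading singular value. Since $\sigma_1(C)\le\sigma_1(S)$, the largest element of the union is $\sigma_1(S+C)=\sigma_1(S)$. The relative threshold $\sigma_1(S+C)/\tau$ therefore coincides with $\sigma_1(S)/\tau$, which is what makes the comparison with $\Dscore_\tau(S)$ possible.

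Finally, I count, using the indicator form of \eqref{eq:d_score}:
\[
\Dscore_\tau(S+C)=\#\{i:\sigma_i(S)\ge\sigma_1(S)/\tau\}+\#\{i:\sigma_i(C)\ge\sigma_1(S)/\tau\}.
\]
The first term is $\Dscore_\tau(S)$. The second term is at least $q$ by the hypothesis $\sigma_q(C)\ge\sigma_1(S)/\tau$, since the condition $\sigma_1/\sigma_i\le\tau$ is equivalent to $\sigma_i\ge\sigma_1/\tau$.

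Ties between singular values of $S$ and $C$ are harmless, because the counts are over multisets. The only genuine obstacle is making the orthogonality assumption precise enough that the singular values of $S+C$ are exactly the union of those of $S$ and $C$; once that holds, the rest is bookkeeping. I would state the two-sided orthogonality interpretation explicitly at the start of the proof.
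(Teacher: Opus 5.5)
Your main argument is correct and is essentially the paper's proof. The paper also reads the hypothesis as two-sided orthogonality and block-diagonalizes $S+C$ with orthogonal $P,Q$; your concatenated thin SVD is the same construction. From there both arguments conclude that the singular values of $S+C$ are the union of those of $S$ and $C$, use $\sigma_1(S+C)=\sigma_1(S)$ so the threshold is unchanged, and finish by counting.

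One correction: drop the fallback you sketch for one-sided orthogonality, because it is false. If only $U_S^\top U_C=0$, then indeed $(S+C)^\top(S+C)=S^\top S+C^\top C$. However, $S^\top S$ and $C^\top C$ are supported on the row spaces spanned by $V_S$ and $V_C$, which may overlap, so the sum need not split into orthogonal blocks.

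Concretely, take $S=e_1e_1^\top$ and $C=e_2e_1^\top$, with $e_1,e_2$ standard basis vectors in $\R^d$ and $\R^T$. The left singular vectors are orthogonal, the right ones coincide, and all hypotheses hold with $q=1$, since $\sigma_1(C)=1\leq\sigma_1(S)$ and $1\geq 1/\tau$. Yet $S+C=(e_1+e_2)e_1^\top$ has rank one, so $\Dscore_\tau(S+C)=1<2=\Dscore_\tau(S)+1$. Symmetrically, $C=e_1e_2^\top$ breaks the right-only case.

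So two-sided orthogonality is needed for the statement itself, not just for your proof. Committing to it explicitly at the start, as you plan to, is the right call.
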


The orthogonality assumption is idealized, but it captures the mechanism behind the method. In practice, the added directions need not be exactly orthogonal, and the separation may only hold statistically, as discussed in Appendix~\ref{ssec:probabilistic_separation}. The proposition should therefore be read as a sufficient condition for an increase in D-Score, not as a complete model of how hallucinations are represented in transformer hidden states.

\subsection{Computation}
\label{ssec:dscore_computation}

This spectral characterization is useful because, in transformer models, the hidden dimension $d$ is typically much larger than the sequence length $T$. It is therefore convenient to work with the Gram matrix $G:=H^\top H\in\R^{T\times T}$. By Property~\ref{prop:dscore_gram_matrix} of Proposition~\ref{prop:computational_properties}, if $\lambda_i(G)=\sigma_i(H)^2$, computing $\Dscore_\tau(H)$ amounts to counting how many eigenvalues of $G$ remain above the relative threshold $\lambda_1(G)/\tau^2$. Thus, one does not need to manipulate the full $d\times T$ hidden-state matrix through a complete SVD in order to evaluate the score.

A full SVD of $H$ costs $\mathcal{O}(dT^2+T^3)$ when $d\geq T$, after the hidden states have been extracted; equivalently, one may form $G$ explicitly at cost $\mathcal{O}(dT^2)$ and compute its full eigendecomposition at cost $\mathcal{O}(T^3)$ \cite{golub2013matrix,trefethen1997numerical}. However, the D-Score only depends on the point where the spectrum falls below a relative threshold, so the full decomposition is not always necessary. When this crossing occurs early, partial eigensolvers such as power iteration, block power iteration, or Lanczos-style methods can estimate only the leading part of the spectrum \cite{trefethen1997numerical,halko2011finding}.

These methods typically require matrix-vector products of the form $v\mapsto Gv=H^\top(Hv)$, each of which costs $\mathcal{O}(dT)$ without explicitly forming $G$. If the procedure estimates $K$ eigenvalues and uses $N_{\mathrm{it}}$ iterations per eigenvalue, the leading cost is approximately $\mathcal{O}(N_{\mathrm{it}}K dT)$, up to reorthogonalization costs. This can be substantially cheaper than a full decomposition when $K\ll T$, which is precisely the regime in which the D-Score threshold is crossed after a small number of singular directions.

\section{Experimental Evaluation}
\label{sec:experiments}

We evaluate whether the D-Score provides a useful hidden-state signal for hallucination detection. The experiments are designed to test two related questions: whether the singular spectrum of hidden activations separates hallucinated from non-hallucinated outputs, and whether this signal remains useful across different model families and hallucination settings. We therefore consider both open-domain factual hallucinations and hallucinations relative to retrieved or supplied evidence, and we compare the D-Score against single-response uncertainty scores and the hidden-state baseline that is closest to our method.

\subsection{Experimental protocol}

We evaluate on two widely-used hallucination benchmarks: FAVA-Annotation \cite{mishra2024fava} and RAGTruth \cite{niu2024ragtruth}. FAVA-Annotation provides fine-grained hallucination annotations for language-model outputs, with error types including entity errors, relation errors, contradictory statements, invented statements, subjective statements, and unverifiable statements. RAGTruth is a hallucination corpus for retrieval-augmented generation, with manually annotated responses and span-level hallucination labels across RAG tasks. For both datasets, we convert the annotation to a binary response-level label indicating whether the text contains at least one hallucinated span. The two benchmarks are complementary: FAVA emphasizes factual errors in model outputs, while RAGTruth tests whether the generated response is supported by the evidence available in the retrieval-augmented setting.

We use three open-source instruction or chat models: \modelname{Llama-2-7B} \cite{touvron2023llama}, \modelname{Llama-3-8B} \cite{dubey2024llama}, and \modelname{Vicuna-7B} \cite{chiang2023vicuna}. These models are representative of the model families commonly used in prior hallucination-detection work. The models are loaded through HuggingFace Transformers \cite{wolf2020transformers}. For each input $\xt$, we run a single forward pass and extract the hidden states layer by layer. Unless otherwise stated, $H_m^{(j)}(\xt)$ is the uncentered matrix whose columns are token hidden states at layer $j$. Note that this is different to what is done in many competing works, where the model is asked to process multiple realizations of the same concept to classify the presence of hallucinations. In particular, our approach is closer to real applications where one wants to classify the presence of hallucination in a real-time conversation, where multiple realizations of the same prompt are typically unavailable.

For each model $m$ and layer $j$, we compute $\Dscore_\tau^{(j)}(\xt;m)$ using \eqref{eq:d_score} with singular values computed efficiently through the power method, as described in Section the \ref{ssec:dscore_computation}, for a user-defined value of $\tau > 0$. The performance of our method is measured through four commonly-used metrics, namely the AUROC, the accuracy, the TPR@5\%FPR, and the F1 Score. The AUROC is computed by using $\Dscore_\tau^{(j)}(\xt;m)$ directly as the scalar hallucination score, accuracy and F1 score are computed by selecting a threshold $\Dbar$ and applying Equation~\eqref{eq:classifier}, while TPR@5\%FPR is computed by sweeping $\Dbar$ and selecting the operating point whose false positive rate is closest to 5\%. The results below use the best-performing layer for each model, following the standard diagnostic convention in hidden-state analysis. In a deployment setting, the layer, $\tau$, and $\Dbar$ should instead be selected on a calibration set and then kept fixed. However, as reported in Appendix \ref{app:dscore_robustness}, we empirically proved that D-Score is particularly robust to differences in the choice of $\tau$, $\Dbar$ and layer.

We compared D-Score against Hidden Score from \cite{sriramanan2024llmcheck}, which is the closest prior hidden-state spectral score. Where available under the same protocol, we also report perplexity-based scoring, window entropy, and logit entropy. These baselines are useful because they do not require external retrieval or multiple generations, and therefore isolate the value of the hidden-state spectral information used by the D-Score.

\subsection{Results}

Table~\ref{tab:fava} reports results on FAVA-Annotation. Across all three models, the D-Score obtains the highest performance among all the listed methods. Interestingly, the difference in performance between D-Score and Hidden Score is more pronounced for more advanced models, reaching a difference of $9.88$ in AUROC for \texttt{Llama-3-8B}, reflecting the observation that D-Score can only better identify those hallucinations that are recognized by the model as non-factual, a property that correlates with its capacity.

\begin{table*}[tbhp]
\caption{Hallucination detection results on FAVA-Annotation. All values are reported in percentage points, and higher is better for every metric. D-Score uses $\Dscore_\tau(\xt)$ as the scalar score for AUROC and the threshold rule in Eq.~\eqref{eq:classifier} for thresholded metrics.}
\label{tab:fava}
\centering
\begin{tabular}{llcccc}
\toprule
Model & Method & AUROC & Acc. & TPR@5\%FPR & F1 Score \\
\midrule
\multirow{5}{*}{Llama-2-7B}
  & PPL Score & 53.22 & 58.68 & 3.59 & 68.33 \\
  & Window Entropy & 56.90 & 56.59 & 2.99 & 42.52 \\
  & Logit Entropy & 53.80 & 55.99 & 2.99 & 56.73 \\
  & Hidden Score & 58.44 & 58.08 & 11.98 & 59.66 \\
  & \textbf{D-Score} & \textbf{62.87} & \textbf{62.87} & \textbf{13.17} & \textbf{68.95} \\
\midrule
\multirow{5}{*}{Llama-3-8B}
  & PPL Score & 53.22 & 58.68 & 3.59 & 67.40 \\
  & Window Entropy & 56.90 & 56.59 & 2.99 & 55.52 \\
  & Logit Entropy & 53.80 & 55.99 & 2.99 & 56.27 \\
  & Hidden Score & 57.10 & 57.78 & 10.78 & 65.38 \\
  & \textbf{D-Score} & \textbf{66.98} & \textbf{65.57} & \textbf{17.96} & \textbf{68.64} \\
\midrule
\multirow{5}{*}{Vicuna-7B}
  & PPL Score & 53.96 & 56.89 & 3.59 & 64.20 \\
  & Window Entropy & 55.24 & 58.38 & 5.99 & 66.02 \\
  & Logit Entropy & 52.29 & 55.69 & 1.80 & 57.31 \\
  & Hidden Score & 58.22 & 59.28 & 10.18 & 66.99 \\
  & \textbf{D-Score} & \textbf{63.93} & \textbf{63.17} & \textbf{13.77} & \textbf{69.67} \\
\bottomrule
\end{tabular}
\end{table*}

Similar results are observed on the RAGTruth benchmark, reported in Table~\ref{tab:ragtruth}, where we only included Hidden Score as a competing method on \texttt{Llama-3-8B} and \texttt{Vicuna-7B} since the other methods did not report the results on this benchmark in their original paper. The D-Score again improves all the metrics for all three models, with gains that are more pronounced on more capable models, reaching an AUROC of $61.53$ for \texttt{Llama-3-8B}: a much larger value than the $54.95$ obtained by Hidden Score. Interestingly, while the AUROC of Hidden Score largely dropped on \texttt{Vicuna-7B}, reaching a value comparable to pure randomness, D-Score keeps its performance, reaching an AUROC value of $60.20$. Overall, the RAGTruth results indicate that the D-Score is not only sensitive to open-domain factual errors, but also to support errors defined with respect to supplied evidence.

\begin{table*}[tbhp]
\caption{Hallucination detection results on the RAGTruth summarization subset. All values are reported in percentage points, and higher is better for every metric.}
\label{tab:ragtruth}
\centering
\begin{tabular}{llcccc}
\toprule
Model & Method & AUROC & Acc. & TPR@5\%FPR & F1 Score \\
\midrule
\multirow{5}{*}{Llama-2-7B}
  & PPL Score & 53.73 & 54.07 & 7.69 & 58.70 \\
  & Window Entropy & 52.08 & 53.17 & 4.98 & 53.98 \\
  & Logit Entropy & 53.95 & 55.43 & 7.24 & 53.74 \\
  & Hidden Score & 54.11 & 56.33 & 8.14 & 61.51 \\
  & \textbf{D-Score} & \textbf{57.30} & \textbf{57.47} & \textbf{12.03} & \textbf{67.54} \\
\midrule
\multirow{2}{*}{Llama-3-8B}
  & Hidden Score & 54.95 & 56.02 & \textbf{15.77} & 30.97 \\
  & \textbf{D-Score} & \textbf{61.53} & \textbf{60.79} & 12.86 & \textbf{67.60} \\
\midrule
\multirow{2}{*}{Vicuna-7B}
  & Hidden Score & 51.93 & 56.02 & 12.45 & 59.89 \\
  & \textbf{D-Score} & \textbf{60.20} & \textbf{59.75} & \textbf{13.69} & \textbf{67.86} \\
\bottomrule
\end{tabular}
\end{table*}

Taken together, the two benchmarks show a consistent pattern: the D-Score improves over Hidden Score for every model and dataset pair, and the improvements are often larger than the differences among the simpler uncertainty baselines. This is the main empirical evidence for the central claim of the paper, namely that the relative number of active singular directions in the hidden activation matrix carries information about hallucination risk.

\subsection{The effect of model capacity}
\begin{table}[tbhp]
\caption{Hallucination detection results on the FAVA subset without hallucinated sample that are \textbf{not explicitly recognized} as hallucinated by the model. All values are reported in percentage points, and higher is better for every metric.}
\label{tab:fava_subset_hall}
\centering
\begin{tabular}{llc}
\toprule
Model & Method & AUROC \\
\midrule
\multirow{2}{*}{Llama-2-7B (n samp = 256)}
  & Hidden Score & 56.26 \\
  & \textbf{D-Score} & \textbf{62.79} \\
\midrule
\multirow{2}{*}{Llama-3-8B (n samp = 118)}
  & Hidden Score & 62.17 \\
  & \textbf{D-Score} & \textbf{72.51} \\
\midrule
\multirow{2}{*}{Vicuna-7B (n samp = 134)}
  & Hidden Score & 59.56 \\
  & \textbf{D-Score} & \textbf{65.46} \\
\bottomrule
\end{tabular}
\end{table}

\begin{table}[tbhp]
\caption{Hallucination detection results on the FAVA subset without non hallucinated sample that are \textbf{explicitly recognized} as hallucinated by the model. All values are reported in percentage points, and higher is better for every metric.}
\label{tab:fava_subset_nonhall}
\centering
\begin{tabular}{llc}
\toprule
Model & Method & AUROC \\
\midrule
\multirow{2}{*}{Llama-2-7B (n samp = 248)}
  & Hidden Score & 56.34 \\
  & \textbf{D-Score} & \textbf{61.86} \\
\midrule
\multirow{2}{*}{Llama-3-8B (n samp = 326)}
  & Hidden Score & 56.27 \\
  & \textbf{D-Score} & \textbf{67.14} \\
\midrule
\multirow{2}{*}{Vicuna-7B (n samp = 318)}
  & Hidden Score & 57.51 \\
  & \textbf{D-Score} & \textbf{63.16} \\
\bottomrule
\end{tabular}
\end{table}
To better understand the behaviour of the D-Score, we conduct a subset analysis on the FAVA-Annotation dataset that isolates the cases in which the model's explicit self-assessment agrees with the human annotation. To obtain the model's explicit hallucination judgement, we prompt each model directly on the same prompt-response pairs used for D-Score evaluation. Specifically, the model receives the original prompt and response concatenated into a structured instruction, and is asked to return a binary answer of the form \texttt{hallucination: true/false}. Generation uses a temperature of zero, making the output fully deterministic, and the response is parsed by checking for the presence of the strings \texttt{true} or \texttt{false} in the decoded output. Samples for which the model produces neither string are discarded. This procedure yields a model-level binary label that is independent of the D-Score and of the human annotation.
The two subsets are then constructed as follows. The first subset retains only hallucinated samples that the model explicitly recognizes as such, removing the cases in which the model fails to flag its own errors. The second subset retains only non-hallucinated samples that the model correctly does not flag, removing the cases in which the model raises a false alarm. Both subsets therefore represent a cleaner evaluation setting in which the model's self-assessment is aligned with the ground truth.
These two subsets serve a precise diagnostic purpose. By removing the samples on which the model's explicit judgement disagrees with the annotation, one obtains a more reliable evaluation set in which the hallucination signal is expected to be stronger, as it eliminates the scenario where D-Score does not identify the hallucination from the hidden state of the model because that input is not a model-internal hallucination as in Definition \ref{def:model_internal_hallucination}, and therefore does not satisfies the hypothesis of Proposition \ref{prop:conflict}. The results for the two subsets are reported in Table~\ref{tab:fava_subset_nonhall} and Table~\ref{tab:fava_subset_hall}. For both the subsets, the D-Score consistently outperforms Hidden Score across all three models. Interestingly, D-Score also largely improves its performance compared to the experiments on the full dataset, reaching an impressive 72.51 of AUROC on \texttt{Llama-3-8B}, proving that a big portion of the misclassification error from the experiment on the full datasets is due to the model being unable to recognize that the provided claim is false.

\section{Discussion, Limitations, and Future Work}
\label{sec:discussion}

The empirical results support the view that hallucination detection can be approached as a hidden-state geometry problem. Rather than using the average magnitude of the spectrum, the probability of the generated sequence, or token-level entropy alone, the D-Score asks how many singular directions remain important before the spectrum falls below a relative threshold. This makes it a natural statistic for the setting considered in this paper: if an internally visible hallucination introduces a conflict between the asserted content and information represented by the model, then the hidden trajectory may become less concentrated in a small number of dominant directions.

At the same time, the D-Score should be interpreted as a risk signal rather than as a direct test of truth. A high value of $\Dscore_\tau(\xt)$ indicates that the hidden representation is spread across many comparable directions, which may happen when the model is processing unsupported or conflicting content. However, the same spectral behavior could also be caused by long, diverse, or semantically complex texts. Conversely, a hallucination may go undetected if the model does not internally represent the relevant evidence, if the conflict signal is too weak, or if the hidden-state change appears mainly as a change in magnitude rather than as the activation of additional directions.

There are also practical limitations. The method requires white-box access to hidden activations, so it cannot be directly applied to closed models unless such activations are exposed. It depends on the selected layer, the spectral threshold $\tau$, and the operating threshold $\Dbar$. Since $\Dscore_\tau(\xt)\leq T$, sequence length can influence the range of the score, and length-stratified calibration may be important in deployment. The use of the best-performing layer is appropriate for diagnostic analysis, but a deployed detector should select the layer and thresholds on a held-out calibration set and keep them fixed at test time.

Future work should complete the comparison against a broader family of baselines, including semantic entropy approximations, consistency-based detectors, probing classifiers, attention-based scores, and retrieval-based factuality checkers. It would also be useful to report layer-wise curves, threshold-transfer experiments across datasets, and length-stratified results, since these analyses would clarify whether the D-Score captures a stable hallucination signal or partly reflects dataset-specific differences in text length and complexity. On the methodological side, natural extensions include centered or normalized hidden-state matrices, token-local variants of the score, and span-level versions that could identify where in the response the hallucination signal appears. On the theory side, the conflict-direction argument should be refined into a more realistic model of transformer hidden states, where the relevant subspaces are not exactly orthogonal and the conflict signal may appear only at specific token positions.

\section{Conclusion}
\label{sec:conclusion}

We introduced the D-Score $\Dscore_\tau(\xt)$ of an input text as a relative numerical rank of the hidden activation matrix of a fixed model and layer. The resulting detector is simple: it classifies a text as hallucinated when $\Dscore_\tau(\xt)$ exceeds a calibrated threshold $\Dbar$. The method is motivated by the idea that internally visible hallucinations may activate additional conflict directions in hidden-state space, increasing the number of singular values that remain large relative to the leading one. Experiments on FAVA-Annotation and RAGTruth show that this single-pass spectral count is a strong hallucination signal across multiple open-source models, while requiring no external verifier, no retrieval step at detection time, and no multiple generations.

\section*{Acknowledgements and Declarations}
This research did not receive any specific grant from funding agencies in the public, commercial, or not-for-profit sectors.
The authors declare no competing interests.

\printcredits

\bibliographystyle{cas-model2-names}
\bibliography{bibliography}

\appendix

\section{Additional Mathematical Details}
\label{app:proofs}

This appendix provides the mathematical details that support the definitions and arguments introduced in the main text. The results clarify the main properties of the D-Score and the conditions under which the proposed geometric interpretation holds.

\subsection{Proof of Proposition~\ref{prop:conflict}}

\begin{proof}
Under the orthogonality assumption of Proposition~\ref{prop:conflict}, the row and column spaces associated with $S$ and $C$ can be represented in mutually orthogonal bases. Therefore, there exist orthogonal matrices $P$ and $Q$ such that
\begin{align*}
P^\top(S+C)Q=
\begin{bmatrix}
A & 0\\
0 & B
\end{bmatrix},
\end{align*}
where the singular values of $A$ are those of $S$, while the singular values of $B$ are those of $C$. It follows that the singular values of $S+C$ are the union of the singular values of $S$ and $C$.

Since $\sigma_1(C)\leq\sigma_1(S)$, the largest singular value of $S+C$ is $\sigma_1(S)$. Hence, the relative threshold used to compute $\Dscore_\tau(S+C)$ is the same as the threshold used for $\Dscore_\tau(S)$, namely $\sigma_1(S)/\tau$. Every singular value of $S$ counted by $\Dscore_\tau(S)$ is therefore still counted in $S+C$. Moreover, by assumption, the first $q$ singular values of $C$ satisfy
\begin{align*}
\sigma_i(C)\geq\frac{\sigma_1(S)}{\tau},
\qquad i=1,\ldots,q,
\end{align*}
and are counted as additional directions. Consequently,
\begin{align*}
\Dscore_\tau(S+C)\geq\Dscore_\tau(S)+q.
\end{align*}
\end{proof}

\subsection{Properties of \texorpdfstring{$\Dscore_\tau(H)$}{the D-Score}}

\begin{proposition}[Properties of $\Dscore_\tau(H)$]
\label{prop:computational_properties}
Let $H\in\R^{d\times T}$ be nonzero and let $\tau>1$.
\begin{enumerate}[label=(\roman*)]
\item \label{prop:dscore_scale_invariance}
If $c\neq 0$, then $\Dscore_\tau(cH)=\Dscore_\tau(H)$.

\item \label{prop:dscore_orthogonal_invariance}
If $U$ and $V$ are orthogonal matrices of compatible dimensions, then $\Dscore_\tau(UHV)=\Dscore_\tau(H)$.

\item \label{prop:dscore_gram_matrix}
If $\lambda_i$ are the nonzero eigenvalues of $H^\top H$, sorted in nonincreasing order, then
\begin{align*}
\Dscore_\tau(H)
=
\sum_i
\bbone\left\{
\lambda_i\geq\frac{\lambda_1}{\tau^2}
\right\}.
\end{align*}

\item \label{prop:dscore_stable_rank_bound}
If $\operatorname{srank}(H)=\norm{H}_F^2/\norm{H}_2^2$ is the stable rank of $H$, then
\begin{align*}
\Dscore_\tau(H)\leq\tau^2\operatorname{srank}(H).
\end{align*}
\end{enumerate}
\end{proposition}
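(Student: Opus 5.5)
The plan is to reduce every item to statements about the multiset of nonzero singular values $\sigma_1(H)\geq\cdots\geq\sigma_r(H)>0$. The reason is that, by Equation~\eqref{eq:d_score}, $\Dscore_\tau(H)$ depends on $H$ only through these values, and in fact only through the ratios $\sigma_1/\sigma_i$.

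For \ref{prop:dscore_scale_invariance}, I will write $H=U\Sigma V^\top$. Then $cH=(\operatorname{sign}(c)U)(|c|\Sigma)V^\top$ is an SVD of $cH$, so $\sigma_i(cH)=|c|\sigma_i(H)$ and $\rank(cH)=r$. Every ratio $\sigma_1/\sigma_i$ is therefore unchanged, and so are the indicators in \eqref{eq:d_score}.

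For \ref{prop:dscore_orthogonal_invariance}, I will note that $UHV=(UU_H)\Sigma(V^\top V_H)^\top$. This is again an SVD, because products of orthogonal matrices are orthogonal. Hence the singular values coincide and the score coincides.

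For \ref{prop:dscore_gram_matrix}, I will use $H^\top H=V\Sigma^\top\Sigma V^\top$. This shows that the nonzero eigenvalues of $H^\top H$ are exactly $\lambda_i=\sigma_i^2$, $i=1,\ldots,r$, in the same order. Since all quantities are positive and $t\mapsto t^2$ is increasing on $(0,\infty)$, the condition $\sigma_1/\sigma_i\leq\tau$ is equivalent to $\sigma_i\geq\sigma_1/\tau$. That in turn is equivalent to $\lambda_i\geq\lambda_1/\tau^2$, so the two indicator sums agree term by term.

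For \ref{prop:dscore_stable_rank_bound}, I will set $D=\Dscore_\tau(H)$. By the $\max$ form of \eqref{eq:d_score} and the monotonicity of the $\sigma_i$, every index $i\leq D$ satisfies $\sigma_i\geq\sigma_1/\tau$. Then
\begin{align*}
\norm{H}_F^2=\sum_{i=1}^r\sigma_i^2\geq\sum_{i=1}^{D}\sigma_i^2\geq D\,\frac{\sigma_1^2}{\tau^2}=\frac{D}{\tau^2}\norm{H}_2^2 .
\end{align*}
Dividing by $\norm{H}_2^2>0$, which holds since $H\neq0$, gives $D\leq\tau^2\operatorname{srank}(H)$.

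None of these steps is a real obstacle. The only point needing care is that the two expressions in \eqref{eq:d_score} coincide, so that the counted indices form an initial segment $\{1,\ldots,D\}$. This follows because $\sigma_1/\sigma_i$ is nondecreasing in $i$, and $i=1$ is always counted since $\tau>1$. I will state this once at the start and use it in \ref{prop:dscore_stable_rank_bound}.
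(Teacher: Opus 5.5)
Your proposal is correct and follows essentially the same argument as the paper: every item is reduced to the nonzero singular values, with (iii) obtained from $\lambda_i=\sigma_i^2$ and (iv) from the Frobenius-norm lower bound $\norm{H}_F^2\geq D\,\sigma_1^2/\tau^2$. Your explicit SVD constructions for (i)--(ii), and your remark that the counted indices form an initial segment, make explicit what the paper's proof uses implicitly.
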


The first two properties show that the D-Score depends on the shape of the singular spectrum rather than on the scale or coordinate system of the hidden states. In particular, it is unchanged by a global rescaling of the activation matrix and by orthogonal changes of basis. The third property provides the formulation used in computation: since the squared singular values of $H$ are the nonzero eigenvalues of $H^\top H$, the D-Score can be obtained from the smaller token-level Gram matrix. The final property connects the D-Score with stable rank, a standard measure of how broadly the energy of a matrix is distributed across its singular directions.

\begin{proof}
If $c\neq 0$, every singular value of $cH$ is equal to $|c|$ times the corresponding singular value of $H$. Hence,
\begin{align*}
\frac{\sigma_1(cH)}{\sigma_i(cH)}
=
\frac{|c|\sigma_1(H)}{|c|\sigma_i(H)}
=
\frac{\sigma_1(H)}{\sigma_i(H)}.
\end{align*}
The set of singular values satisfying the relative threshold is unchanged, and therefore $\Dscore_\tau(cH)=\Dscore_\tau(H)$.

If $U$ and $V$ are orthogonal matrices of compatible dimensions, then $UHV$ has the same singular values as $H$. It follows directly that $\Dscore_\tau(UHV)=\Dscore_\tau(H)$.

For the third property, the nonzero eigenvalues of $H^\top H$ satisfy $\lambda_i=\sigma_i(H)^2$. Therefore, $\sigma_i(H)\geq\sigma_1(H)/\tau$ if and only if $\lambda_i\geq\lambda_1/\tau^2$. Counting the eigenvalues that satisfy this condition gives
\begin{align*}
\Dscore_\tau(H)
=
\sum_i
\bbone\left\{
\lambda_i\geq\frac{\lambda_1}{\tau^2}
\right\}.
\end{align*}

Finally, let $k=\Dscore_\tau(H)$. By definition, $\sigma_i(H)\geq\sigma_1(H)/\tau$ for $i=1,\ldots,k$. It follows that
\begin{align*}
\norm{H}_F^2
=
\sum_i\sigma_i(H)^2
\geq
\sum_{i=1}^k\sigma_i(H)^2
\geq
k\frac{\sigma_1(H)^2}{\tau^2}.
\end{align*}
Since $\sigma_1(H)=\norm{H}_2$, division by $\norm{H}_2^2$ gives
\begin{align*}
\operatorname{srank}(H)
=
\frac{\norm{H}_F^2}{\norm{H}_2^2}
\geq
\frac{k}{\tau^2}.
\end{align*}
Rearranging the inequality yields $\Dscore_\tau(H)\leq\tau^2\operatorname{srank}(H)$.
\end{proof}

\subsection{A simple probabilistic separation statement}
\label{ssec:probabilistic_separation}

The geometric argument in the main text can also be expressed as a statistical separation condition. Let $Y\in\{0,1\}$ denote the hallucination label and let $D=\Dscore_\tau(\xt)$. Suppose that there exist two integers $a<b$ such that
\begin{align*}
\mathbb{P}(D\leq a\mid Y=0)&\geq 1-\delta_0,\\
\mathbb{P}(D\geq b\mid Y=1)&\geq 1-\delta_1.
\end{align*}
The first condition states that non-hallucinated examples have a D-Score no larger than $a$ with high probability, while the second states that hallucinated examples have a D-Score of at least $b$ with high probability.

Consider any decision threshold $\Dbar$ satisfying $a\leq\Dbar<b$. A classification error can occur only when a non-hallucinated example has $D>a$, or when a hallucinated example has $D<b$. Therefore,
\begin{align*}
\mathbb{P}\left(\bbone\{D>\Dbar\}\neq Y\right)
&\leq
\mathbb{P}(Y=0)\delta_0+\mathbb{P}(Y=1)\delta_1\\
&\leq
\delta_0+\delta_1.
\end{align*}
This result formalizes a simple point: when the D-Score distributions of the two classes are separated with high probability, every threshold placed between the two concentration regions has a correspondingly small classification error. The terms $\delta_0$ and $\delta_1$ account for the overlapping tails of the two distributions.

\section{Algorithmic Details}
\label{app:algorithmic_details}

The D-Score can be computed exactly through an SVD of $H$ or an eigendecomposition of the Gram matrix $G=H^\top H$. Since the score only counts eigenvalues above the relative threshold $\theta=\lambda_1/\tau^2$, a complete decomposition may perform more work than needed. A partial eigensolver can instead estimate the eigenvalues in decreasing order and stop as soon as the next estimated eigenvalue falls below $\theta$.

Algorithm~\ref{alg:power} describes this procedure using deflated power iteration. The algorithm does not need to form $G$ explicitly, since each multiplication by $G$ can be evaluated as $g(v)=H^\top(Hv)$. The first power iteration estimates the largest eigenvalue $\lambda_1$ and defines the threshold $\theta$. The following iterations estimate the remaining eigenvalues while keeping the current vector orthogonal to the eigenvectors that have already been found.

\begin{algorithm}[tbhp]
\caption{D-Score by deflated power iteration}
\label{alg:power}
\begin{algorithmic}[1]
\Require $H\in\R^{d\times T}$, $\tau>1$, $K_{\max}$, $N_{\mathrm{it}}$
\Ensure $\widehat{\Dscore}_\tau(H)$
\State Define $g(v)=H^\top(Hv)$
\State Estimate the leading eigenpair $(\widehat{\lambda}_1,q_1)$ of $H^\top H$ by power iteration
\State Set $\theta=\widehat{\lambda}_1/\tau^2$, $Q=[q_1]$
\For{$k=2,\ldots,K_{\max}$}
  \State Draw a random unit vector $v$ orthogonal to $Q$
  \For{$t=1,\ldots,N_{\mathrm{it}}$}
    \State $v\leftarrow g(v)-Q(Q^\top g(v))$
    \State $v\leftarrow v/\norm{v}_2$
  \EndFor
  \State $\widehat{\lambda}_k\leftarrow v^\top g(v)$
  \If{$\widehat{\lambda}_k<\theta$}
    \State \Return $k-1$
  \EndIf
  \State $Q\leftarrow[Q\;\;v]$
\EndFor
\State \Return $K_{\max}$
\end{algorithmic}
\end{algorithm}

The returned value is approximate because the eigenvalues are estimated from a finite number of iterations. In particular, an inaccurate estimate of $\lambda_1$ also changes the threshold $\theta$. The number of iterations should therefore be large enough to estimate the leading eigenvalues accurately around the threshold crossing. Instead of always using a fixed number of iterations, an implementation may stop when the change in the Rayleigh quotient falls below a chosen tolerance.

The parameter $K_{\max}$ limits the number of eigenvalues that the algorithm attempts to recover. If the threshold crossing is reached before $K_{\max}$, the algorithm returns the number of estimated eigenvalues above $\theta$. If all estimated eigenvalues remain above the threshold, the returned value is only a lower bound unless $K_{\max}\geq\rank(H)$. In this setting, $K_{\max}$ can be set to at most $\min\{d,T\}$, although a smaller value is sufficient when the spectrum usually crosses the threshold early.

If $K$ eigenvalues are evaluated using $N_{\mathrm{it}}$ matrix-vector iterations, the products involving $H$ require approximately $\mathcal{O}(N_{\mathrm{it}}KdT)$ operations. Orthogonalization adds a cost that grows with $K$, but remains limited when the threshold is crossed after a small number of directions. The partial computation can therefore be cheaper than a complete eigendecomposition when $K\ll T$.

\section{Robustness of the D-Score}
\label{app:dscore_robustness}

\begin{figure*}[bthp]
\centering
\includegraphics[width=\linewidth]{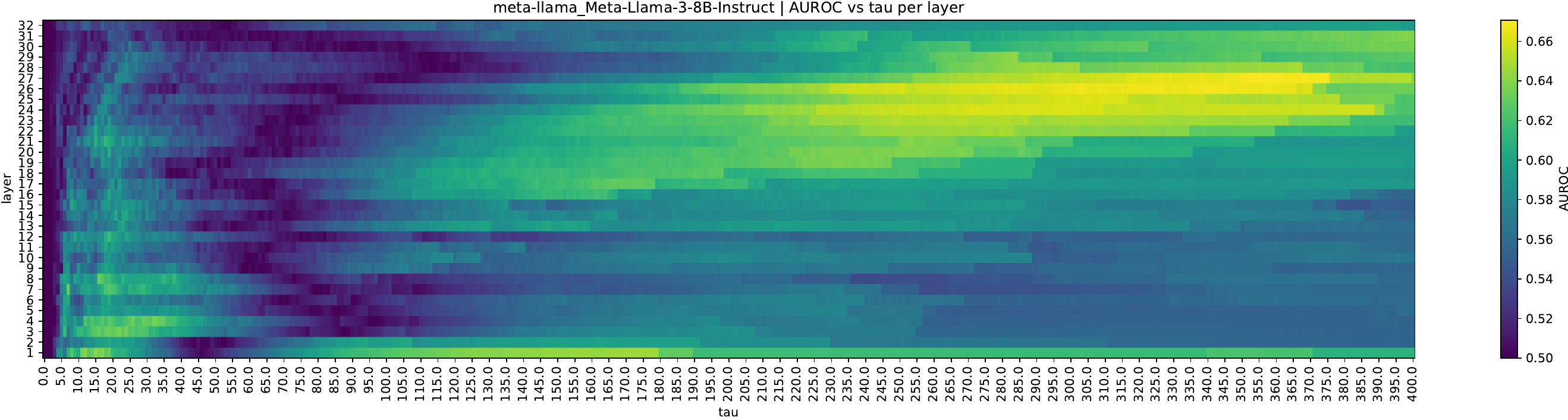}
\caption{AUROC of the D-Score detector on FAVA-Annotation with Llama-3-8B-Instruct as a function of the tolerance parameter $\tau$ on the horizontal axis and the layer $j$ on the vertical axis. A broad region of nearby configurations achieves similar performance, indicating that the score is not tied to one isolated choice of $\tau$ and $j$.}
\label{fig:auroc_heatmap}
\end{figure*}

The D-Score detector depends on the tolerance parameter $\tau$, the selected layer $j$, and the decision threshold $\Dbar$. The threshold $\Dbar$ controls the operating point after the scalar score has been computed, while $\tau$ and $j$ determine the score itself. We examine the sensitivity to $\tau$ and $j$ using FAVA-Annotation and Llama-3-8B-Instruct as a representative setting. Since AUROC evaluates the ranking induced by the scalar D-Score across all possible values of $\Dbar$, this analysis does not require fixing a particular decision threshold.

Figure~\ref{fig:auroc_heatmap} reports the AUROC obtained from $D=\Dscore_\tau^{(j)}(\xt;m)$ for different values of $\tau$ and different layers $j$. The heatmap shows a broad region in which several nearby layer--$\tau$ combinations achieve similar AUROC values. Performance changes gradually across much of this region, rather than dropping sharply when either parameter is moved away from the best configuration.

This result suggests that the observed performance is not tied to one isolated combination of layer and tolerance. The figure remains a diagnostic analysis on one dataset and one model, so $\tau$ and $j$ should still be selected on validation data and then kept fixed when evaluating new examples. The heatmap shows that this selection allows some flexibility around the best-performing region.

This supports validation-based parameter selection and reduces the concern that the reported performance is caused by a single unstable configuration. Further analyses could examine the stability of the calibrated decision threshold $\Dbar$, the transfer of the selected parameters across datasets, and the effect of sequence length on the score distribution.

\newpage

\bio{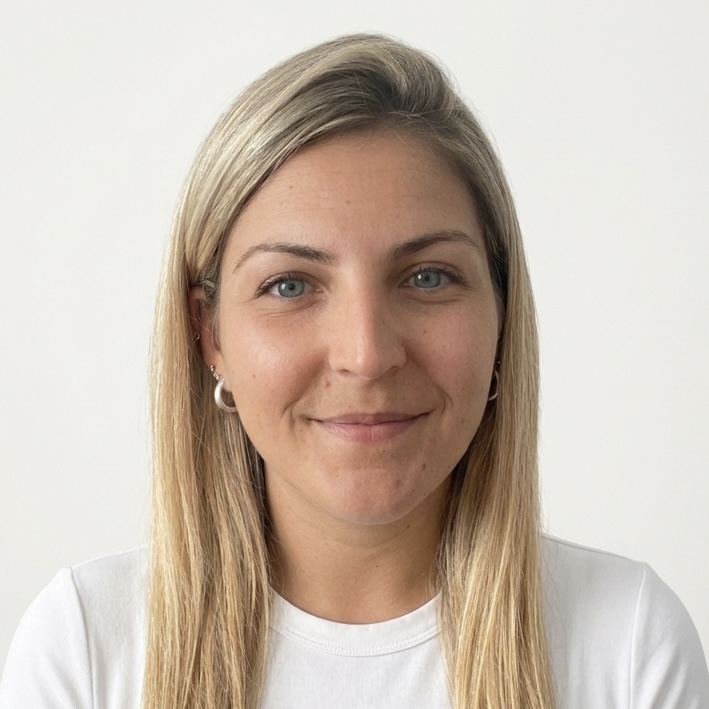}
Bianca Raimondi holds a Master's degree in Computer Science from the University of Bologna. She is currently a PhD student specialising in Data Science and Computation. Her research focuses on applying Large Language Models in education, particularly examining the biases of these models and how they represent information internally.
\endbio

\bio{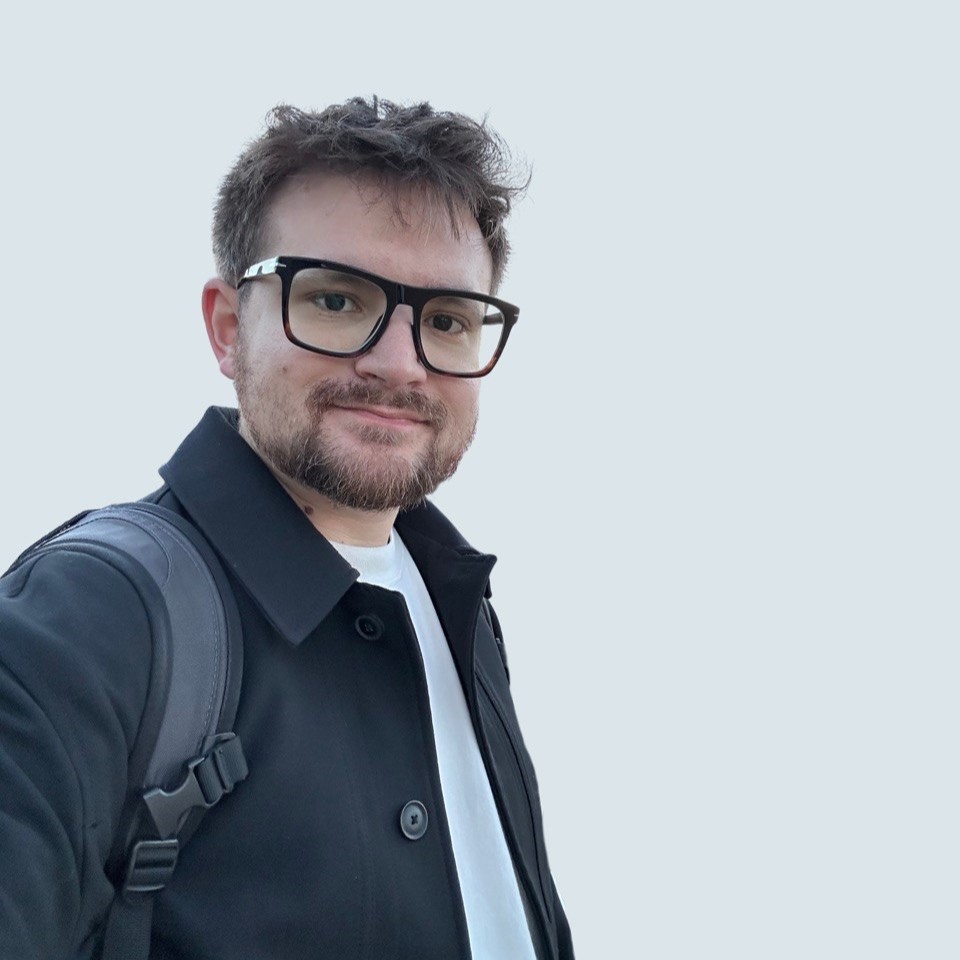}
Davide Evangelista is an Assistant Professor at the Department of Computer Science and Engineering at the University of Bologna. His research focuses on the use of Deep Generative Models for medical image reconstruction, with particular interest in improving the quality and reliability of imaging methodologies through data-driven approaches. More recently, his work has expanded toward the foundational properties of Deep Generative Models and Large Language Models, exploring their theoretical understanding and broader implications for machine learning systems.
\endbio

\bio{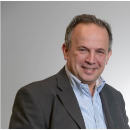}
Maurizio Gabbrielli is professor of Computer Science since 2001 at the Department of Computer Science and Engineering of the University of Bologna and Associate dean for AI at Bologna Business School. He has been Head of the Department of Computer Science and Engineering and member of the INRIA project team FOCUS. He received his Ph.D. in Computer Science in 1992 from the University of Pisa and has been employed at Centrum Wiskunde \& Informatica (CWI, Amsterdam), at the University of Pisa and at the University of Udine.
\endbio

\bio{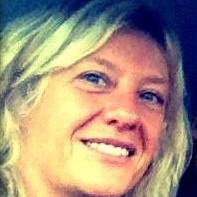}
Elena Loli Piccolomini is Full professor of Numerical Analysis at the Department of Computer Science and Engineering at the University of Bologna. Her research focuses on computational imaging, with particular emphasis on medical imaging and tomographic reconstruction using variational methods and deep learning techniques.
\endbio

\end{document}